\pdfoutput=1
\documentclass[conference]{IEEEtran}
\IEEEoverridecommandlockouts
\usepackage{cite}

\usepackage{amsmath,amssymb,amsfonts,amsthm}
\usepackage[acronym]{glossaries}
\makeglossaries
\usepackage{graphicx}
\usepackage{hyperref}
\usepackage{textcomp}
\usepackage{xcolor}
\usepackage{comment}
\usepackage{booktabs} 
\usepackage{multirow}
\usepackage{rotating}
\usepackage{hyperref}
\newtheorem{proposition}{Proposition}
\newtheorem{definition}{Definition}

\def\Vc{V_{c,i}}
\def\Vsucc{V_{c,s(i)}}
\def\alphai{\alpha_i}

\def\calT{\mathcal{T}}
\def\Cfree{\mathcal{C}_{free}}
\def\simcom{\mathcal{T} = \{\Delta_1, ..., \Delta_N\}}
\def\Vcj{V_{c,j}}
\def\Cinf{C^\infty}
\def\Tilal{\tilde{\alpha}}
\def\fexiti{f_{\text{exit},i}}
\def\dthetad{\dot{\theta}_d}
\def\pxthetad{\frac{\partial \theta_d}{\partial x}}
\def\pythetad{\frac{\partial \theta_d}{\partial y}}
\def\Bsafe{\partial\mathcal{S}}
\def\safe{\mathcal{S}}
\def\free{\mathcal{F}}
\DeclareMathOperator{\atan}{atan2}

\newacronym{qp}{QP}{quadratic programming}
\newacronym{apf}{APF}{artificial potential fields}
\newacronym{cbf}{CBF}{control barrier function}
\newacronym{dt}{DT}{Delaunay triangulation}
\newacronym{cdt}{CDT}{constrained Delaunay triangulation}
\newacronym{lp}{LP}{linear programs}
\newacronym{gvd}{GVD}{generalized voronoi diagram}

\def\BibTeX{{\rm B\kern-.05em{\sc i\kern-.025em b}\kern-.08em
    T\kern-.1667em\lower.7ex\hbox{E}\kern-.125emX}}
\begin{document}

\title{\LARGE Planning Smooth and Safe Control Laws for a\\ Unicycle Robot Among Obstacles 
}
\author{Aref Amiri$^1$, Ba\c{s}ak Sak\c{c}ak$^{1,2}$, and Steven M. LaValle$^1$
\thanks{\copyright~2026 IEEE. This work has been accepted for publication in the 2026 European Control Conference (ECC). Personal use of this material is permitted. Permission from IEEE must be obtained for all other uses.}
\thanks{This work was supported by Infotech Oulu, a European Research Council Advanced Grant, and Academy of Finland (project BANG! 363637).}
\thanks{$^{1}$Faculty of Information Technology and Electrical Engineering, University of Oulu, 
{\tt\small firstname.lastname@oulu.fi}}%
\thanks{$^{2}$Department of Advanced Computing Sciences, Maastricht University {\tt\small basak.sakcak@maastrichtuniversity.nl}}}

\maketitle

\begin{abstract}
This paper presents a framework for safe navigation of a unicycle point robot to a goal position in an environment populated with obstacles from almost any admissible state, considering input limits. We introduce a novel \gls{qp} formulation to create a $\Cinf$-smooth vector field with reduced total bending and total turning. Then we design an analytic, non-linear feedback controller that inherently satisfies the conditions of Nagumo's theorem, ensuring forward invariance of the safe set without requiring any online optimization. We have demonstrated that our controller, even under hard input limits, safely converges to the goal position. Simulations confirm the effectiveness of the proposed framework, resulting in a twice faster arrival time with over 50\% lower angular control effort compared to the baseline.
\end{abstract}


\section{Introduction}
Motion planning is concerned with finding a trajectory connecting an initial state with a goal state that avoids obstacles. However, this solution is open-loop. Therefore, applying (if available) the computed controls or following the computed trajectory may deviate from the goal.
Feedback motion planning avoids this limitation by generating a state-feedback control law over the free configuration space $\Cfree$~\cite{lavalle2006planning,lindemann2009simple}. 
This control law can steer almost any admissible robot configuration to the goal, ensuring also some inherent robustness to disturbances.
Classical methods for computing feedback laws to solve motion planning problems rely on \gls{apf} \cite{khatib1986real}.
\gls{apf} combine attractive and repulsive forces, moving the robot along the gradient of a potential function. However, they suffer from local minima (where attractive and repulsive forces cancel each other) and poor performance in narrow passages \cite{koren1991potential}. Navigation functions\cite{rimon1990exact}, a special class of APFs, overcome local minima; however, they are limited to simple environments. 

Considering determining a feedback law that is globally convergent to the goal, Lindemann and LaValle~\cite{lindemann2009simple} proposed blending smooth vector fields computed over a convex decomposition of $\Cfree$. The main drawback of this approach is that the quality of the resulting integral curves is highly sensitive to the assignment of local vector fields and the underlying cell decomposition. The quality of the vector fields becomes critical when it is used to guide a system, such as a unicycle robot. A guidance vector field with unnecessary bending leads to longer travel time, longer paths, and higher control efforts. This paper builds upon~\cite{lindemann2009simple} and we propose a new formulation to assign cell vector fields with a convex \gls{qp}. 

Because the vector fields are locally assigned, the choice of an appropriate cell decomposition is critical. Convex cell decomposition methods provide a structured way to partition the free configuration space, $\Cfree$, into a set of convex regions. Standard decomposition methods include trapezoidal cell decomposition \cite{deBerg2008} and simplicial decompositions (which our work is built upon) such as \gls{dt}~\cite{lee1980two}.
 
In higher dimensions, where obstacle boundaries are represented implicitly, or non-polygonal obstacles in 2D, volumetric collision detection modules can be used to embed an approximate yet safe triangulation within $\Cfree$ \cite{yershov2015planning}. For an environment with polygonal obstacles, 
\gls{cdt} is a particularly suitable approach as it can produce an exact cell decomposition, which can be done in $O(n \log n)$ \cite{shewchuk1996triangle}. \gls{cdt} is a generalization that respects the obstacle boundaries by forcing the specified polygonal edges into the triangulation \cite{engwirda2014locally}. \gls{cdt}, without the addition of extra vertices (Steiner points), can still produce thin triangles. Although using Steiner points can improve mesh quality, our framework can operate on any simplicial complex embedded in the free position space $\free$.

A successful navigation method should ensure that the robot reaches the goal while avoiding obstacles and respecting additional relevant constraints.
\glspl{cbf} are a powerful tool for providing formal safety guarantees by ensuring that the state is forward invariant within a safe set \cite{ames2016control}. Satisfying a \gls{cbf}-based constraint implies safety, and this is typically imposed through a \gls{qp} that works as a safety filter. The common approach in \glspl{cbf} is to maintain a predetermined safe distance from obstacles \cite{desai2022clf,singletary2021comparative}. These approaches are practical for holonomic vehicles but are often conservative for non-holonomic systems, since the system is unable to avoid obstacles by controlling angular velocity, requiring more complex \glspl{cbf} \cite{huang2023obstacle,lee2025turning}. In contrast to \gls{cbf} methods that typically solve a \gls{qp} at each step to find the control inputs, our analytic controller inherently provides a formal safety guarantee without requiring optimization.

This paper presents a framework for constructing a feedback motion plan for a unicycle robot moving in a 2D workspace populated by obstacles. Our approach is to sequentially determine a guidance vector field over the workspace free from obstacles, and use this guidance vector field to determine a state-feedback control law for the unicycle system. While control of a unicycle to track a predefined trajectory has already been explored in the literature (e.g.,~\cite{lee2001tracking,feedbackllintracking}), planning a feedback control law is inherently more robust. 
We first introduce a novel \gls{qp} formulation to generate the underlying guidance vector field. This produces a $\Cinf$-smooth field with lower curvature compared to \cite{lindemann2009simple}, which is essential for efficiency. Then, we introduce a novel non-linear feedback controller, inspired by the principles in \cite{lindemann2006real}. We demonstrate asymptotic convergence to the goal from almost any admissible state. Furthermore, we demonstrate that our analytic controller inherently satisfies the conditions of Nagumo's theorem~\cite{egerstedt2021robot}, ensuring forward invariance of the safe set. Additionally, we show that our controller ensures that the safe set is globally attractive. Finally, we demonstrate through simulation that the synergy of our low-curvature QP-vector field and the controller allows the robot to reach the goal with a significantly lower arrival time and angular control effort, proving its effectiveness even under hard input saturation.

\section{Problem Formulation}
We consider the problem of safe navigation of a unicycle point robot in a 2D environment populated with obstacles. Let $\mathcal{W}\subseteq\mathbb{R}^2$ be a planar environment and let $\mathcal{O} \subseteq \mathcal{W}$ be the obstacles. The set of points not occupied by obstacles is the free position space $\free = \mathcal{W}\setminus\mathcal{O}$. The robot's configuration at time $t$ is denoted by $q(t)=[x,y,\theta]^T$, where $p=[x,y]^T\in \mathcal{W}$ is its position and $\theta \in S^1$ is its orientation with respect to a global reference frame. The free configuration space is denoted by $\mathcal{C}_{\text{free}} := \mathcal{F}\times S^1$. A configuration $q \in \mathcal{W}\times S^1$ is collision-free if $q \in \mathcal{C}_{\text{free}}$. The kinematic model of the robot is: 
\begin{equation}
\begin{bmatrix} \dot{x} \\ \dot{y} \\ \dot{\theta} \end{bmatrix} =
\begin{bmatrix} \cos\theta & 0 \\ \sin\theta & 0 \\ 0 & 1 \end{bmatrix}
\begin{bmatrix} v \\ \omega \end{bmatrix},
\label{eq:kinematics}
\end{equation}
where $v$ and $\omega$ are the forward linear velocity and angular velocity control inputs, respectively. These inputs are assumed to be within saturation limits: 
\begin{equation}\label{eq:v_lim}
 0\leq v \leq v_\text{max}, 
\end{equation}
\begin{equation}\label{eq:w_lim}
|\omega| \leq \omega_\text{max}.
\end{equation}

Our method is built upon a simplicial complex embedded in the free space, which can be an exact cell decomposition for the polygonal obstacles (with \gls{cdt}) or an approximation for the non-polygonal obstacles. Given any starting state $q_0$ in the embedded triangulation in $\free$ (except for the set of measure zero, which are the vertices of the triangles), and a goal position $p_g = [x_g,y_g]^T \in \free$, the objective is to design a feedback control law that respects the saturation limits at all times and guarantees: 1) Safety (Collision avoidance): The robot's position remains in the free workspace $\free$ for all time $t \ge 0$. 2) Convergence: The robot's position asymptotically converges to the goal position while respecting the input limits. 

Our approach solves this problem in two stages. Following the same strategy as \cite{lindemann2009simple}, we first compute a $\Cinf$-smooth vector field $V(p):\free \to \mathbb{R}^2$ convergent to $p_g$. To improve the quality of the vector field, we formulate a \gls{qp} problem to assign all cell vector fields, complemented by an averaging technique for the exit face vector fields, to have a smoother transition between cells. Then, by the given vector field, we define a target orientation at every point $p \in \free$, denoted by $\theta_d(V(p))$. The controller must not only drive the orientation error $\phi = \theta- \theta_d$ to zero, but also ensure safety and asymptotic convergence to $p_g$.

\section{Smooth Vector Field Formulation}\label{sec:vector_field_formulation}
The core of our method is the well assignment of local vector fields across a simplicial decomposition embedded in the free space, $\free$. 
This section details our optimization-based approach, which begins by reestablishing the foundational concepts upon which our formulation is based.

\subsection{ Discrete Plan and Vector Field Constraints}

Let $\simcom$ be a simplicial complex embedded in $\free$. To guide the robot motion globally, we first compute a high-level discrete plan over $\mathcal{T}$. This is achieved by first constructing a connectivity graph where nodes correspond to the centroids of the $2$-simplexes (triangles) and edges connect adjacent $2$-simplexes. This construction results in a sparse graph with at most $3N/2$ edges (undirected), where $N$ is the finite number of simplexes embedded in $\free$. A single-source shortest path algorithm, such as Dijkstra's, can then be used to compute a successor mapping, $s(i)$, for each simplex $\Delta_i \in \mathcal{T}\setminus \Delta_G$, in which $\Delta_G$ is the $2$-simplex containing the goal position. This mapping defines the unique successor $2$-simplex that the robot should transition to from $\Delta_i$ to eventually reach the goal simplex $\Delta_G$.

The foundation of our feedback plan is the assignment of a constant cell vector field, $\Vc$, to each $2$-simplex $\Delta_i \in \calT$. Each $2$-simplex has three vertices, and each $1$-dimensional face (edge), like the exit face (the face of a cell shared with its successor), has two vertices. Therefore, for a given exit face, there exists an opposite vertex ($v_{ov,i}$) that does not lie on the exit face. For a $2$-simplex $\Delta_i$, the conical region (a wedge) is defined by two boundary vectors, $\{b_{i,1},b_{i,2}\}$, which point from $v_{ov,i}$ to each of the two vertices of the exit face $f_{\text{exit},i}$. See Figure~\ref{fig:concept} for an illustration. Choosing a cell vector field $\Vc$ pointing inside this conical region guarantees that the robot will go out from the exit face and satisfies the conditions of the definition of the cell vector field \cite{lindemann2009simple}. We recall this definition here.
\begin{definition}
\label{def: def1}
For a $2$-simplex $\Delta_i$ with an exit face $f_{\text{exit},i}$, a cell vector field $V_{c,i}$ is a smooth unit vector field on $\Delta_i$ that satisfies three conditions:
\begin{enumerate}
    \item For each point $p \in \Delta_i$, there exists a $q \in f_{\text{exit},i}$ and $\alpha \in \mathbb{R}$ such that $V_{c,i}(p) = \alpha(q - p)$.
    \item Let $h$ be a \gls{gvd} \cite{88035} face with normal vector $n$. If $V_{c,i}(p) \cdot n = 0$ for some $p \in h$, then $V_{c,i}(p) \cdot n = 0$ for all $p \in h$.
    \item The directed transition graph induced by this choice of vector fields is acyclic, and every path through this graph terminates at the node corresponding to the exit face.
\end{enumerate}
\end{definition}

\begin{proposition}
\label{thm:thm1}
A constant cell vector field $\Vc$ that is chosen to point within the conical region defined by the boundary vectors satisfies the conditions of Definition~\ref{def: def1}.
\end{proposition}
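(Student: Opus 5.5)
I will check the three conditions of Definition~\ref{def: def1} one at a time. Let $\Vc$ be the constant unit vector. The hypothesis is that it lies in the open cone spanned by $b_{i,1}$ and $b_{i,2}$. Write $\Vc=\lambda_1 b_{i,1}+\lambda_2 b_{i,2}$ with $\lambda_1,\lambda_2>0$. Let $w_1,w_2$ be the endpoints of $\fexiti$, so that $b_{i,k}=w_k-v_{ov,i}$. Smoothness is immediate, since a constant field is $\Cinf$.

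\textbf{Condition 1.} Fix $p\in\Delta_i$ and consider the ray $p+t\Vc$, $t\ge 0$. I will show that it leaves $\Delta_i$ through $\fexiti$. For this I use barycentric coordinates $(\mu_{ov},\mu_1,\mu_2)$ with respect to $(v_{ov,i},w_1,w_2)$. Moving along $\Vc$ changes these coordinates at rates $(-(\lambda_1+\lambda_2),\lambda_1,\lambda_2)$. This holds because $\Vc=\lambda_1(w_1-v_{ov,i})+\lambda_2(w_2-v_{ov,i})$ and the coordinates sum to one.

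Hence $\mu_1$ and $\mu_2$ strictly increase while $\mu_{ov}$ strictly decreases. The ray can therefore only leave $\Delta_i$ when $\mu_{ov}$ reaches $0$, which happens at a finite $t^*\ge 0$. At that instant $\mu_1,\mu_2\ge 0$, so the point $q=p+t^*\Vc$ lies on $\fexiti$.

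If $t^*>0$, then $\Vc=\alpha(q-p)$ with $\alpha=1/t^*$. If $t^*=0$, then $p$ already lies on $\fexiti$. In that case I take $q=p+\epsilon\Vc$ along the edge, or note that $\alpha$ is unconstrained and any $q$ on the face works in the degenerate situation. I expect this boundary case to need a short remark rather than real work.

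\textbf{Condition 2.} Since $\Vc$ is constant, $\Vc\cdot n$ is the same number at every point of the face $h$. So if it vanishes at one point, it vanishes at all points, and the condition holds trivially.

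\textbf{Condition 3.} This is the main obstacle, because the transition graph is only loosely specified in the paper. I will interpret it as the graph on the faces of $\Delta_i$, and possibly also on the GVD pieces, whose directed edges record which face an integral curve can flow from toward another. The plan is to use $\mu_{ov}$, or equivalently the linear function $g(p)=-\nu\cdot p$ with $\nu$ the normal of $\fexiti$ pointing away from it, as a strict Lyapunov-type function. By the computation above, $g$ strictly increases along every integral curve.

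Any directed edge of the graph corresponds to flow from a region to one with strictly larger $g$ values, or at least larger supremum values of $g$. A directed cycle would therefore force $g$ to increase strictly back to its starting value, which is impossible, so the graph is acyclic.

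Every curve reaches $\fexiti$ in finite time. This follows from condition 1 together with the uniform decrease rate $\lambda_1+\lambda_2>0$ of $\mu_{ov}$. Consequently every maximal path in the graph terminates at the node corresponding to $\fexiti$. I would phrase this through the monotone function so as to avoid relying on details of the GVD construction in~\cite{lindemann2009simple}.
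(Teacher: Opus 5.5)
Your route is the paper's. You check the three conditions separately, get condition 2 from constancy of $\Vc$, and get conditions 1 and 3 from the uniform progress of $\Vc$ toward $\fexiti$. Your barycentric rates $(-(\lambda_1+\lambda_2),\lambda_1,\lambda_2)$ are a clean, self-contained substitute for the citation the paper relies on in condition 1.

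The one step that does not go through as written is the acyclicity part of condition 3. A function $g$ that increases along every integral curve does not exclude a cycle in the transition graph. The edges of a cycle can be realized by different trajectories, and their $g$-values are unrelated. Your fallback claim about suprema is also false. The region nearest the edge $[v_{ov,i},w_1]$ and the region nearest $\fexiti$ both have $w_1$ in their closure, where $g$ attains its maximum over $\Delta_i$, yet the first flows into the second. The paper is no sharper here: it only invokes monotone progress and defers to \cite{11478294}. Still, your specific justification should be replaced, and the replacement needs exactly the GVD geometry you wanted to avoid.

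Here is one repair. In a triangle, the GVD of the edges consists of the three bisector segments from the vertices to the incenter. Since $\Vc$ is constant, $\Vc\cdot n$ has a single sign (or vanishes) on each segment, so each segment is crossed in at most one direction. At an exit vertex $w_k$, let $u$ be the unit vector along $\fexiti$ and $u'$ the unit vector toward $v_{ov,i}$. The bisector runs along $u+u'$, and the normal pointing into the exit-face region is $u-u'$. Expanding $b_{i,1},b_{i,2}$ in this basis gives $\Vc=c\,u-c'\,u'$ with $c\ge0$ and $c'>0$, hence $\Vc\cdot(u-u')=(c+c')(1-u\cdot u')>0$. So the exit-face region has no outgoing edge. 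The only other possible edge joins the two non-exit regions, in a single direction. The graph is therefore acyclic, and your finite-time exit argument then makes the exit-face region its unique sink.

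There are two smaller points. First, the QP constraint $0\le\Tilal_i\le1$ allows $\Vc$ to be parallel to $b_{i,1}$ or $b_{i,2}$, so you should take $\lambda_1,\lambda_2\ge0$ with $\lambda_1+\lambda_2>0$. Your argument survives this, since it only needs $\mu_1,\mu_2$ nondecreasing and $\mu_{ov}$ strictly decreasing. Second, your handling of $t^*=0$ is wrong. The point $p+\epsilon\Vc$ lies outside $\Delta_i$, not on the edge. Because $\Vc\cdot n_x>0$, no $q\in\fexiti$ satisfies $\Vc=\alpha(q-p)$, since $q-p$ is parallel to the face. Condition 1 as literally stated therefore fails on $\fexiti$ for every field transverse to it. It has to be read for $p\in\Delta_i\setminus\fexiti$, which is what the paper implicitly does.
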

\begin{proof}
 As established in~\cite{11478294}, any positive linear combination of two boundary vectors $\{b_{i,1},b_{i,2}\}$ used to construct $\Vc$ will point toward the exit face $f_{\text{exit},i}$. Since both $\Vc$ and $n$ (the constant normal vector to a face of the \gls{gvd}) are constant vectors, if $\Vc \cdot n = 0$ for some point on $h$, it is zero for all points on $h$. Because $\Vc$ points towards the exit face, any trajectory starting in $\Delta_i$ will proceed monotonically toward $f_{\text{exit},i}$ and is guaranteed to intersect it. For the full proof, we refer the reader to \cite{11478294}.
\end{proof}

\begin{figure}[t]
    \centering
    \includegraphics[width=0.60\columnwidth]{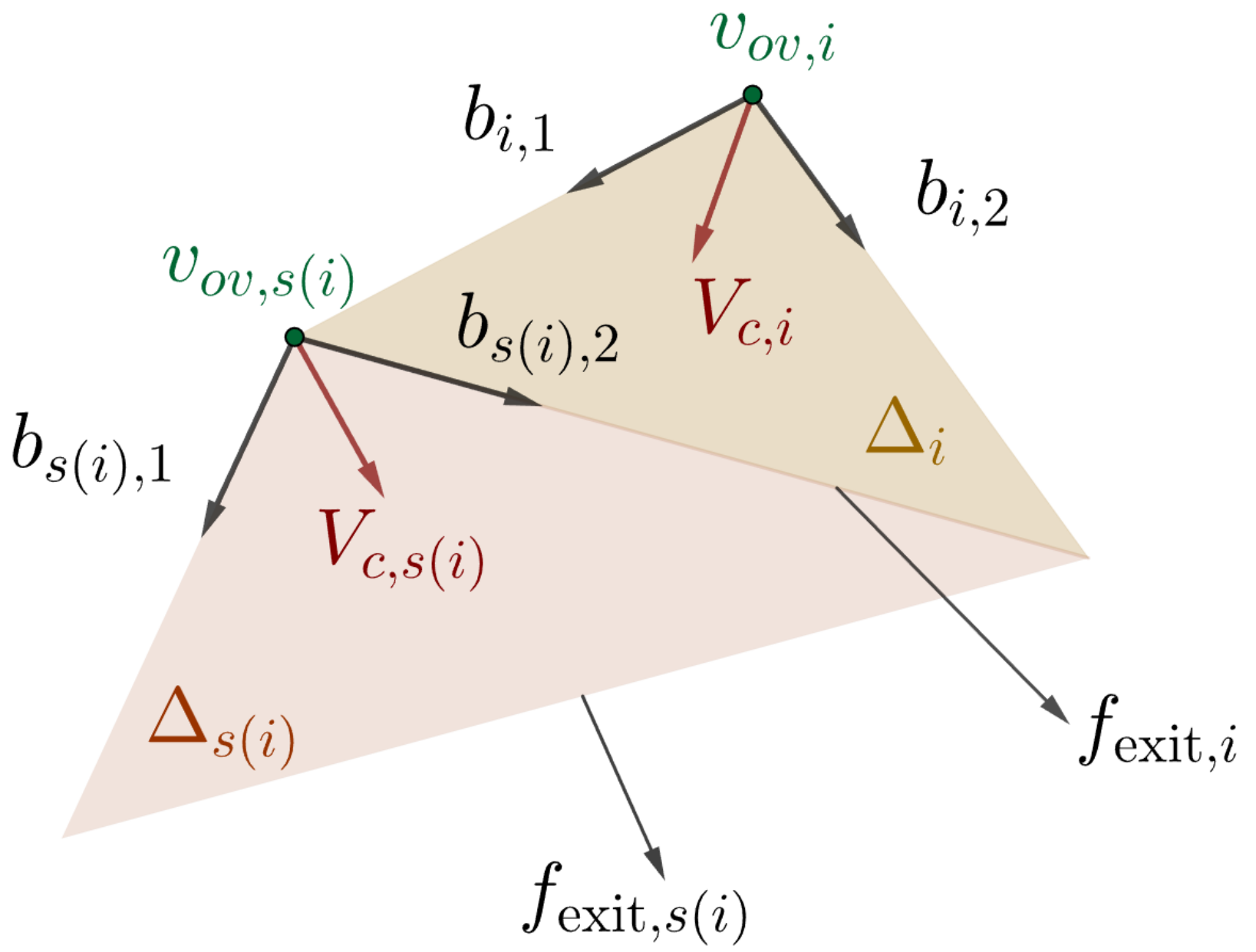}
    \caption{Figure shows two adjacent $2$-simplexes (triangles), $\Delta_i$ and its successor $\Delta_{s(i)}$. For cell $\Delta_i$, the candidate vector field $\Vc$ must lie within the conical region formed by the boundary vectors, $b_{i,1}$ and $b_{i,2}$. These vectors are defined by the opposite vertex $v_{ov,i}$ and the vertices of the shared exit face $\fexiti$.}
    \label{fig:concept}
\end{figure}

\subsection{Assigning Cell Vector Fields via \gls{qp} }

Our objective is to minimize the sum of squared Euclidean differences between the cell vector fields of each cell $i$ and its successor $s(i)$, which encourages directional alignment. For a $2$-simplex $\Delta_i$, the cell vector $\Vc$ is parameterized by two variables $\alphai = [\alpha_{i,1},\alpha_{i,2}]^T$:
\begin{equation}
\Vc(\alphai) = \sum_{k=1}^{2} \alpha_{i,k} b_{i,k}, 
\quad 
\text{s.t.} \quad 
\sum_{k=1}^{2} \alpha_{i,k} = 1, \ \alpha_{i,k} \ge 0.
\end{equation}
To express this in terms of a single independent variable, we eliminate one variable, e.g., $\alpha_{i,2} = 1 - \alpha_{i,1}$. Let $\Tilal_i$ be a vector of these independent variables. The cell vector field can then be:
\begin{equation}
\Vc(\Tilal_i) = b_{i,2} + D_i \Tilal_i,
\end{equation}
where $D_i$ is defined as:
\begin{equation}\label{eq:Di}
D_i = [b_{i,1}-b_{i,2}].
\end{equation}
We then pose the optimization problem as: 
\begin{equation}
\begin{aligned}
\underset{\Tilal_1, \dots,\Tilal_{N-1}}{\text{minimize}} & \quad J = \sum_{i=1}^{N-1} \| \Vc(\Tilal_i) - \Vsucc(\Tilal_{s(i)}) \|^2\\
\text{s.t. } & \hspace{0.8em} 0 \le \Tilal_i\le 1, \forall i = 1, \dots ,N-1,
\end{aligned}
\end{equation}
where $N-1$ is the number of cells in the discrete plan so that the successor index $s(i)$ of cell $i$ is defined. It is critical to note that we minimize the squared vector distance, not the angle directly. This objective is chosen specifically because it results in a convex \gls{qp} problem. This choice serves as a reasonable strategy for aligning the vectors because our parameterization, a convex combination of normalized boundary vectors, keeps the magnitudes of the cell vectors $\|\Vc\|$ within an acceptable, non-zero range. This prevents the optimization from finding the trivial solution of shrinking all the cell vectors to zero and ensures the dominant factor in minimizing the objective is the directional alignment between the vectors. 

Substituting $D_i$ given by Eqn.~\eqref{eq:Di} into a single term of the objective function for an adjacent pair $(i,j)$ (where $j = s(i)$) yields:
\begin{equation}
\|\Vc - \Vcj\|^2 
= \| D_i \Tilal_i - D_j \Tilal_j + c_{ij} \|^2,
\end{equation}
where $c_{ij} = b_{i,2} - b_{j,2}$. Expanding the squared norm yields the standard \gls{qp} form, $\frac{1}{2}\Tilal^T H \Tilal + f^T \Tilal + \text{const}$, where the global Hessian $H$ and linear vector $f$ are assembled by summing the contributions from each pair $(i,j)$. The contribution to the Hessian from the pair $(i,j)$ is a block matrix:
\begin{equation}
H_{ij} = 2
\begin{bmatrix}
D_i^T D_i & -D_i^T D_j \\
-D_j^T D_i & D_j^T D_j
\end{bmatrix},
\end{equation}
and the linear term is:
\begin{equation}
f_{ij} = 2
\begin{bmatrix}
D_i^T c_{ij} \\[3pt] -D_j^T c_{ij}
\end{bmatrix}.
\end{equation}

The global Hessian matrix $H$ is constructed as a sum of matrices of the form $D^T D$, where $D = [D_i, -D_j]$, which is positive semidefinite. Since the sum of positive semidefinite matrices is also positive semidefinite, $H$ is positive semidefinite. Therefore, the problem is a convex \gls{qp}. After finding the cell vector fields, we normalize them to have a unit length. 

\subsection{Assignment of Face Vector Fields}
With the constant cell vector fields, $\Vc$, optimally assigned by the \gls{qp}, the remaining components are the face vector fields. To ensure smooth and safe transitions across boundaries, the constant face vector fields are assigned according to the following rules:
\begin{itemize}
    \item For any non-exit face of a $2$-simplex $\Delta_i$, the face vector is defined as the normalized inward-pointing normal, similar to \cite{lindemann2009simple}. This ensures integral curves to move away from boundaries. 
    \item For the exit face shared between a $2$-simplex $\Delta_i$ and its successor $s(i)$, the face vector is set to the normalized average of their respective cell vector fields, $(\Vc + \Vsucc)$. This promotes a smooth transition from one cell to its successor.
\end{itemize}
\begin{definition}
\label{def: def2}
A face vector field $V_f(f_{\text{exit},i})$ corresponding to the exit face of a cell $\Delta_i$ is a smooth unit vector field satisfying:
\begin{itemize}
    \item For each point $p \in f$, the vector field $V_f(f_{\text{exit},i})$ satisfies $V_f(f_{\text{exit},i}) \cdot n_x > 0$, where $n_x$ is the outward-pointing normal vector for $f = f_{\text{exit},i}$.
    
\end{itemize}
\end{definition}
\begin{proposition}
The averaged exit face vector field $V_f(f_{\text{exit},i}) = \text{normalize}(\Vc + V_{c,j})$ satisfies the conditions of Definition~\ref{def: def2} for an exit face.
\end{proposition}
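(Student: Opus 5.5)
The plan is to reduce the claim to a sign statement about two inner products. Let $n_x$ be the outward unit normal of $f = \fexiti$ with respect to $\Delta_i$; it is constant along the segment $f$. Since $V_f(\fexiti)$ is a constant vector, smoothness is immediate. The exit condition in Definition~\ref{def: def2} is then equivalent to
\[
(\Vc + V_{c,j})\cdot n_x > 0,
\]
provided $\Vc + V_{c,j} \neq 0$. Normalization by a positive scalar does not change the sign, and the same positive sum rules out the zero vector, so normalization is well defined. It therefore suffices to show $\Vc\cdot n_x>0$ and $V_{c,j}\cdot n_x \ge 0$, with at least one of the two strict.

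For the first inequality I would use the cone construction of Proposition~\ref{thm:thm1}. Write $\Vc=\alpha_{i,1}b_{i,1}+\alpha_{i,2}b_{i,2}$ with $\alpha_{i,k}\ge 0$ and $\alpha_{i,1}+\alpha_{i,2}=1$; the later normalization only rescales this. Each $b_{i,k}=w_k-v_{ov,i}$ points from the opposite vertex to an endpoint $w_k$ of $f$. Hence $b_{i,k}\cdot n_x$ equals the distance from $v_{ov,i}$ to the line through $f$, and this is strictly positive because $\Delta_i$ is nondegenerate. Both boundary vectors therefore have the same positive normal component $h_i$, so $\Vc\cdot n_x=h_i>0$ before normalization, and it stays positive after.

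For the second inequality, the main obstacle is $V_{c,j}$ with $j=s(i)$. It lies in the cone of $\Delta_j$ toward its own exit face $f_{\text{exit},j}$, and that face generally differs from $f$. I would use the same cone argument inside $\Delta_j$. The face $f$ is a non-exit face of $\Delta_j$, since the acyclic successor map from Dijkstra prevents $s(j)=i$. So $f$ contains the opposite vertex $v_{ov,j}$ and one endpoint $w$ of $f_{\text{exit},j}$. Then $b_{j,1}$ or $b_{j,2}$ equals $w-v_{ov,j}$, which lies along $f$ and has zero normal component. The other boundary vector points from $v_{ov,j}$ to the vertex of $\Delta_j$ not on $f$. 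That vertex lies on the far side of $f$ from $\Delta_i$, so its component along $n_x$ is strictly positive. Any convex combination therefore gives $V_{c,j}\cdot n_x\ge 0$.

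Adding the two inequalities gives $(\Vc+V_{c,j})\cdot n_x>0$, which completes the argument. The one subtle point is the case where the QP drives $V_{c,j}$ onto the boundary ray lying along $f$. There the strict inequality comes entirely from $\Vc$, which is why the first inequality must be strict.
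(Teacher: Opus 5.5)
Your proposal is correct and follows the same route as the paper's proof: you split $(\Vc+V_{c,j})\cdot n_x$ into a strictly positive term coming from the cone of $\Delta_i$ and a non-negative term coming from the cone of $\Delta_j$. The paper justifies these two inequalities only by citing Proposition~\ref{thm:thm1} and the claim that $V_{c,j}$ ``points away from'' $\fexiti$. You verify both directly: both boundary vectors of $\Delta_i$ have positive normal component, and since $s(j)\neq i$ the face $\fexiti$ contains $v_{ov,j}$, so one boundary vector of $\Delta_j$ lies along that face while the other points strictly across it. This also makes explicit why only $\ge 0$ is available for the second term.
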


\begin{proof} 
We know from Proposition~\ref{thm:thm1} that $\Vc \cdot n_x > 0$. Since $\Vcj$ is chosen to be within the conical region $\mathcal{K}_j$, it points away from $f_{\text{exit},i}$, therefore, $V_{c,j} \cdot n_x \ge 0$.  
The dot product of the sum is $(\Vc + \Vcj) \cdot n_x = (\Vc \cdot n_x) + (\Vcj \cdot n_x)$. The sum of a strictly positive and a non-negative term is strictly positive, and this satisfies the conditions. This is a special case of Proposition~2 in~\cite{11478294} and the proof follows from the proof of Proposition~2.
\end{proof}

\subsection{Final Blended Vector Field}
The final blended vector field, $V(p)$, is a $\Cinf$-smooth field synthesized from the local vector fields. At any point $p$ within a $2$-simplex $\Delta_i$, the vector field is a smooth interpolation between the cell's vector field, $\Vc$, and the vector field of the closest face. This blending is performed using a $\Cinf$-smooth bump function $b(\eta)$ similar to \cite{lindemann2009simple}: 
\begin{equation}
b(\eta) = \begin{cases}
0 & \eta \leq 0, \\
\frac{\lambda(\eta)}{\lambda(\eta) + \lambda(1 - \eta)} & 0 < \eta < 1, \\
1 & \eta \geq 1,
\end{cases}
\end{equation}
in which the auxiliary function $\lambda(\eta)$ is defined as $\lambda(\eta) = (1/\eta) e^{-1/\eta}$.

This final vector field, $V(p)$, serves as the reference for our feedback controller discussed in the following section.

\section{Controller}\label{sec:controller}
The second core of this problem is to design a controller for steering the unicycle robot to a goal position. The smooth vector field over the free workspace $\free \subset \mathbb{R}^2$, that is, $V : \free \rightarrow \mathbb{R}^2$, computed according to the previous section, will be used to determine the control inputs to the unicycle system given in Eq.~\eqref{eq:kinematics}.  

The controller is defined by two laws, one for determining the linear velocity $v$ and one for the angular velocity $\omega$. At any point $p \in \free$, the vector field $V$ expresses an instantaneous desired orientation $\theta_d(V(p)) = \atan(V_y(p), V_x(p))$ for the robot where $V(p) = [V_x(p), V_y(p)]^T$. The primary objective of the controller is to drive the orientation error, $\phi = \theta-\theta_d$, between the robot's actual orientation $\theta$ and the target orientation $\theta_d$ to zero. We use the wrapping function to ensure $\phi$ is always within the interval $(-\pi, \pi]$.  

\subsection{Linear velocity control}
The control input corresponding to the linear velocity $v$ is a function of the orientation error $\phi$. The robot should slow down or stop when misaligned to prioritize reorientation, thereby ensuring safe navigation. This behavior can easily be achieved with one of the following laws:
\begin{equation} 
v = v_\text{max} \max\left(0, 1 - \frac{|\phi|}{\epsilon_v}\right),
\label{eq:v_control1}
\end{equation}
or 
\begin{equation} 
v = v_\text{max} \cos(\phi),
\label{eq:v_control2}
\end{equation}
where $v_\text{max}$ is the maximum linear velocity and $\epsilon_v>0$ is a constant corresponding to the angular error threshold. If $|\phi| \ge \epsilon_v$, the robot stops moving forward. The first law restricts the robot to non-negative velocity and provides a tunable response, and the resulting velocity command is continuous. The second law allows for backward motion, where the robot might reverse to better approach its target orientation, therefore the velocity input is restricted to be within $-v_\text{max}\leq v \leq v_\text{max}$. The choice between these two laws represents a design trade-off between a more conservative, forward-only strategy and a smoother, potentially more agile strategy that allows backward motion. 

\subsection{Angular Velocity Control}
The angular velocity $\omega$ is designed as a stabilizing state-feedback law that ensures the robot's orientation $\theta$ smoothly converges to the target orientation $\theta_d(V(p))$. The control law is composed of two components: a feedforward term to anticipate the path's curvature and a feedback term to correct the orientation error. 

The feedforward component, $\dthetad$, calculates the required angular velocity to follow the changing direction of the guidance vector field $V(p)$ as the robot moves. This term allows the robot to steer along the integral curves of the vector field rather than constantly reacting to errors. This concept is analogous to the control required to remain on the ``Target Manifold'' as described in \cite{lindemann2006real}. Using the chain rule $\dthetad$ is expressed as:
\begin{equation}\label{eq:chain_thetad}
   \dthetad = \frac{d\theta_d}{dt} = \pxthetad\dot{x} + \pythetad\dot{y}.
\end{equation}
By substituting the robot's kinematics in \eqref{eq:kinematics} into \eqref{eq:chain_thetad} we get
\begin{equation}   
 \dthetad = v \left( \left. \pxthetad \right|_{x=p_x} \cos\theta + \left.\pythetad \right|_{y=p_y}\sin\theta \right).
\end{equation}
The terms corresponding to the partial derivatives, that is, $\pxthetad$ and $\pythetad$, are computed numerically from the underlying guidance vector field.
The feedback component is responsible for driving the orientation error $\phi$ to zero. It ensures that the target orientation is attractive, meaning the robot will always turn to correct any deviation from the guidance vector. This law can be defined using a nonlinear proportional control law:  $ -k \tanh(\phi)$ where $k>0$ is a positive gain parameter that dictates the rate of convergence. The hyperbolic tangent function, $\tanh(\cdot)$, is chosen for its desirable properties: it provides a smooth, nearly linear response for small errors while naturally saturating for large errors. This prevents overly aggressive turning maneuvers.
Combining the feedforward and feedback components gives the angular velocity control: 
\begin{equation}
\omega = \dthetad - k \tanh(\phi).
\label{eq:omega_control}
\end{equation}
Finally, imposing the input limits in \eqref{eq:w_lim}, the control input corresponding to angular velocity becomes:
\begin{equation}
\omega = \sigma(\dthetad - k \tanh(\phi), \omega_\text{max}),
\label{eq:omega_control_sat}
\end{equation}
where $ \sigma(u, u_\text{max}) $:
\begin{equation}
\sigma(u, u_\text{max}) = \begin{cases}
  u_\text{max} & \text{if } u > u_\text{max}, \\
  u & \text{if } |u| \le u_\text{max}, \\
  -u_\text{max} & \text{if } u < -u_\text{max}.
  \end{cases}
\end{equation}
\section{Stability and Safety Analysis}\label{sec:Stability}
In this section, we provide the formal guarantees for the proposed control system. We demonstrate two key properties: (i) \emph{Safety}, that is, the robot is guaranteed to remain collision-free, (ii) \emph{convergence}, that is, the robot position is guaranteed to converge to a goal position $p_g \in \free$.

Let $V$ be a vector field over $\free$, determined according to the formulation given in Section~\ref{sec:vector_field_formulation}. The closed-loop control system is obtained by substituting the state-feedback controls computed based on $V$, given by Eqs. \eqref{eq:v_control1} and \eqref{eq:omega_control_sat} with $\epsilon_v \in (0, \pi/2]$ (or given by Eqs. \eqref{eq:v_control2} and \eqref{eq:omega_control_sat}), into Eq. \eqref{eq:kinematics}.

\begin{proposition}
\label{prop:safety}
Let $q(t)$ be the trajectory of the closed-loop system initialized at $q(0)=q_0$ for some initial configuration $q_0 \in \mathcal{C}_{\text{free}}$. Then, $q(t)$ satisfies for all $t\geq0$ that $q(t)\in \mathcal{C}_{\text{free}}$ for almost any $q_0$ (other than a set of measure zero).
\end{proposition}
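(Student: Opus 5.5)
Since orientation is unconstrained in $\mathcal{C}_{\text{free}}=\free\times S^1$, safety is purely a statement about the position $p(t)$. I will show that the triangulated region $\safe := \bigcup_i \Delta_i\subseteq\free$ is forward invariant for the closed-loop position dynamics, using Nagumo's theorem: a closed set is forward invariant if, at every boundary point, the velocity lies in the tangent (Bouligand) cone. Because the position dynamics are $\dot p = v[\cos\theta,\sin\theta]^T$, I check this condition only on the obstacle-facing boundary $\Bsafe$. This boundary is the union of the non-exit faces of simplexes that are not shared with a neighbouring simplex.

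\textbf{Step 1: the guidance field points inward on $\Bsafe$.} On such a face the blended field equals the face vector field there, since the bump function gives full weight to the closest face. By the assignment rule of Section~\ref{sec:vector_field_formulation}, that face vector field is the normalized inward normal $n_{in}$. Hence $V(p)=n_{in}$ and $\theta_d(p)$ is the inward normal direction.

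\textbf{Step 2: the velocity lies in the tangent cone.} Write $\dot p\cdot n_{in} = v\cos(\theta-\theta_d)=v\cos\phi$.
\begin{itemize}
\item Under law \eqref{eq:v_control1} with $\epsilon_v\le\pi/2$: if $|\phi|\ge\epsilon_v$ then $v=0$; otherwise $|\phi|<\pi/2$, so $\cos\phi>0$ and $v\ge 0$. In both cases $\dot p\cdot n_{in}\ge 0$.
\item Under law \eqref{eq:v_control2}: $\dot p\cdot n_{in}=v_\text{max}\cos^2\phi\ge 0$.
\end{itemize}
Either way $\dot p$ lies in the tangent cone of $\safe$ at $p$, whatever $\omega$ is. So the saturation in \eqref{eq:omega_control_sat} plays no role in the safety argument, and this is why the result holds under hard input limits. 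The same computation applies at interior points of obstacle faces, and at corner points where two boundary faces meet. At a corner the tangent cone is the intersection of two half-planes, and I would use that the blended field is a convex-type blend of the two inward normals there.

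\textbf{Step 3: Nagumo's theorem and its hypotheses.} Nagumo's theorem also needs existence and uniqueness of solutions. The vector field $V$ is $\Cinf$ away from the triangle vertices, $\tanh$ and $\sigma$ are Lipschitz, and $v$ is Lipschitz in $\phi$. The difficulty is that $\theta_d=\atan(V_y,V_x)$ is smooth only where $V\neq 0$, and $\phi$ suffers a wrapping discontinuity at $|\phi|=\pi$. Under law \eqref{eq:v_control1} the robot is stationary near $|\phi|=\pi$, so that discontinuity is harmless. Under law \eqref{eq:v_control2} I would note that $\cos\phi$ is continuous across the wrap. The measure-zero exception in the statement covers the remaining bad set: the vertices of the triangles, where the blended field is undefined or discontinuous. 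I would argue that trajectories starting off this set do not reach it, or that reaching it is nongeneric.

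\textbf{Main obstacle.} I expect the hardest step to be the boundary regularity. This means checking that Step 1 holds in a whole neighbourhood of $\Bsafe$, not only exactly on it, and handling the corners and vertices. I would first try to prove that within the blending band near a boundary face, $V\cdot n_{in}>0$, because both the cell vector field and $n_{in}$ point away from that non-exit face. I would then relegate the vertices to the measure-zero exceptional set.
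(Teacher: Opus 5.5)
Your proposal is correct and rests on the same observation as the paper's proof: on an obstacle face the blended field is the inward normal, so $\dot p\cdot n_{in}=v\cos\phi$, and this is nonnegative under both velocity laws ($v=0$ once $|\phi|\ge\epsilon_v$, or $\dot p\cdot n_{in}=v_\text{max}\cos^2\phi\ge 0$) regardless of $\omega$ and its saturation. The paper states this informally (a collision would need $v>0$ together with $|\phi|>\pi/2$, which both laws exclude), whereas you cast it as Nagumo's tangent-cone condition and make explicit the well-posedness caveats (triangle vertices, the wrapping discontinuity of $\phi$) that the paper silently folds into its measure-zero exception.
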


\begin{proof}
By construction of the vector field $V$, $V(p)$ points away from the obstacle for any point $p$ on the obstacle boundary, as the non-exit face vector fields are assigned as inward-pointing normal. Let $V_{\text{rob}}=[v \cos\theta, v \sin\theta]^T$ be the velocity vector of the robot. A collision can only occur if the forward velocity $v>0$ and $V_{\text{rob}}$ points into an obstacle.

Because $V$ points away from obstacles, this can only happen if the orientation error satisfies $|\phi| > \pi/2$. 
Under Eq. \eqref{eq:v_control1}, since we chose $\epsilon_v \in (0, \pi/2]$, the velocity $v$ is strictly driven to zero at or before $|\phi|$ reaches $\pi/2$. Under Eq. \eqref{eq:v_control2}, this law drives $v$ to zero as $|\phi|$ approaches $\pi/2$, and $v$ becomes negative (moving backward, away from the direction of $\theta$) if $|\phi| > \pi/2$. In both cases, in any state where the robot is oriented such that it could drive into an obstacle, its forward velocity $v$ is driven to zero or becomes negative. This safety guarantee is independent of the angular velocity $\omega$ and its saturation. Saturation on $\omega$ (Eq. \eqref{eq:omega_control_sat}) only affects the rate of reorientation; it does not change the fact that $v$ becomes zero or negative, preventing forward motion into an obstacle. Therefore, the system is proven to be collision-free.
\end{proof}

\begin{proposition}
\label{prop:stability}
The trajectory of the closed-loop system initialized at $q(0)=q_0$, governed by the control law given by the Eqs. \eqref{eq:v_control1} and \eqref{eq:omega_control} (or by Eqs. \eqref{eq:v_control2} and \eqref{eq:omega_control}), is guaranteed to converge to the goal region $\mathcal{C}_{\text{goal}}:=\{(p, \theta) \in \mathcal{C}_{\text{free}} \mid  p=p_g\}$ for almost any $q_0$ (other than a set of measure zero).
\end{proposition}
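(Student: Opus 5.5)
The argument splits into two layers: the orientation error $\phi$ is shown to converge to zero, and then the position is shown to follow the integral curves of $V$, which already converge to $p_g$ by the construction of Section~\ref{sec:vector_field_formulation}. Since Eq.~\eqref{eq:omega_control} is unsaturated, I substitute it into the kinematics. Whenever $\theta_d$ is differentiable along the trajectory, this gives
\begin{equation*}
\dot{\phi} = \dot{\theta} - \dthetad = \omega - \dthetad = -k\tanh(\phi).
\end{equation*}
The error dynamics are therefore decoupled from position. With the Lyapunov function $W(\phi)=\tfrac{1}{2}\phi^2$ we get $\dot W = -k\phi\tanh(\phi)\le 0$, with equality only at $\phi=0$. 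Hence $\phi(t)\to 0$; indeed $|\phi(t)|$ is monotonically decreasing on $(-\pi,\pi)$, and it decays exponentially once $|\phi|$ is small, since $\tanh(\phi)\ge c\,\phi$ on compact sets.

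The exceptional set is $\phi_0=\pi$, the antipodal orientation. There the wrapping makes $\tanh(\phi)$ discontinuous and $\phi=\pi$ behaves like an unstable equilibrium. This set has measure zero. Points where $V$ is undefined or fails to be smooth, namely the triangle vertices and $p_g$, are also excluded as measure zero, in line with the problem formulation.

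For the position, I use $v$ from Eq.~\eqref{eq:v_control1} or Eq.~\eqref{eq:v_control2}. Because $|\phi|$ is nonincreasing, there is a finite time $T$ after which $|\phi|<\epsilon_v/2$. For Eq.~\eqref{eq:v_control2} one instead takes $|\phi|<\pi/4$, so that $\cos\phi>0$. Either way, after $T$ we have $v\ge v_{\min}>0$, and the robot velocity satisfies $\dot p = v\,(V(p)/\|V(p)\| + e(t))$ with $\|e(t)\|\le 2|\sin(\phi/2)|\to 0$ exponentially. So $\dot p$ is a reparametrized copy of the flow of $V$ plus a vanishing perturbation. Proposition~\ref{prop:safety} keeps $p(t)\in\free$ throughout.

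Convergence of the perturbed flow is argued cell by cell. In each $\Delta_i$ (by Proposition~\ref{thm:thm1} and Definition~\ref{def: def1}), together with the exit-face vector (Definition~\ref{def: def2}), $V$ has a strictly positive component $V\cdot n_x\ge\delta_i>0$ along the exit normal, and it points inward on non-exit faces. Once $\|e\|<\delta_i$, the robot crosses each cell in finite time, since $v\ge v_{\min}$, and it advances monotonically along the acyclic successor chain toward $\Delta_G$. By the same argument it cannot exit through a non-exit face. Within $\Delta_G$ the cell field points toward $p_g$, and the distance to $p_g$ decreases at rate at least $v(\cos\phi)\ge v_{\min}\cos\phi$ until arrival. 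Therefore $p(t)\to p_g$, that is, $q(t)\to\mathcal{C}_{\text{goal}}$.

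\textbf{Main obstacle.} The hardest step is making the perturbation argument rigorous before $\|e\|$ is small. During the transient, the robot may drift into a cell and move backward along the discrete plan, or, near a triangle vertex, $\theta_d$ may vary sharply. I would handle this by showing that the transient displacement is bounded by $v_{\max}T$ and stays in $\free$ by Proposition~\ref{prop:safety}. After $T$, the remaining cell-sequence argument applies from whatever cell the robot occupies, since every cell has a finite path to $\Delta_G$. The residual issue is near-vertex trajectories where the $\delta_i$ degenerate; I would exclude these as a measure-zero set of initial conditions rather than attempt uniform bounds.
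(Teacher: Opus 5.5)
Your first layer is the paper's argument. Substituting Eq.~\eqref{eq:omega_control} decouples the error dynamics into $\dot\phi=-k\tanh(\phi)$, and $L=\tfrac12\phi^2$ gives $\phi\to 0$. From there the paper simply treats the robot as lying on $\mathcal{M}$, so that $\dot p=v_{\max}V(p)$, and defers convergence of the integral curves to Lindemann--LaValle~\cite{lindemann2009simple}. You rightly note that $\phi$ only tends to zero asymptotically, so the robot never actually reaches $\mathcal{M}$, and you try to close this with a perturbation argument.

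The step carrying that argument fails. The uniform bound $V\cdot n_x\ge\delta_i>0$ on all of $\Delta_i$ is not supplied by Proposition~\ref{thm:thm1} and Definition~\ref{def: def2}. They give a positive exit-normal component only for the constant $V_{c,i}$ and for the exit-face vector, not for the blended field. Near a non-exit face $f$, $V$ is interpolated toward the inward unit normal $m_f$. If $f$ meets $f_{\text{exit},i}$ at a vertex with interior angle $\gamma$, then $m_f\cdot n_x=\cos\gamma$, which is negative whenever $\gamma>\pi/2$. Obtuse triangles do arise, e.g.\ from a CDT without Steiner points. So on an open strip along such a face, $V\cdot n_x<0$. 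This is a set of positive measure, not a near-vertex degeneracy, and it cannot be removed by discarding a measure-zero set of initial conditions. Without a uniform $\delta_i$, your claims that the robot crosses each cell in finite time and advances monotonically along the successor chain do not follow.

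The strategy can be kept if you change the progress function: measure progress in $\Delta_i$ along the constant direction $V_{c,i}$ rather than along $n_x$.
\begin{itemize}
\item Both non-exit faces contain $v_{ov,i}$, and $V_{c,i}$ is a nonnegative combination of $b_{i,1},b_{i,2}$. Hence $V_{c,i}\cdot m_f\ge 0$ for each non-exit face, strictly when $0<\tilde{\alpha}_i<1$.
\item $V_{c,i}$ cannot be antiparallel to $V_{c,s(i)}$, because $V_{c,i}\cdot n_x>0$ while $V_{c,s(i)}\cdot n_x\ge 0$. So $V_{c,i}$ has positive inner product with the averaged exit-face vector.
\item Consequently every convex blend occurring in $\Delta_i$ satisfies $V\cdot V_{c,i}\ge\mu_i>0$ uniformly. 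If the QP solution lies on the cone boundary, tilt the direction slightly toward the offending $m_f$.
\end{itemize}
Then $p\mapsto V_{c,i}\cdot p$ increases at rate at least $v_{\min}(\mu_i-\|e\|)$, so the robot must leave the bounded cell. It can only leave through $f_{\text{exit},i}$, since $V=m_f$ on the non-exit faces and $\|e\|<\mu_i\le 1$. This also makes your near-vertex exclusion unnecessary for this step.

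A minor point: under the $(-\pi,\pi]$ convention, $\phi=\pi$ is not an equilibrium, since $\dot\phi=-k\tanh(\pi)<0$ there. Excluding it is harmless but not needed.
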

\begin{proof}
First, we prove that the robot's orientation $\theta$ converges to the guidance direction $\theta_d$. Second, we show that this alignment drives the robot's position $p$ to the goal $p_g$. We define the Target Manifold 
$\mathcal{M} = \{ (p, \theta) \in \mathcal{C}_{\text{free}} \mid \phi = \theta - \theta_d(V(p)) = 0  \}$
as the set of all states where the robot is perfectly aligned with the guidance vector field.

We choose a Lyapunov candidate function $L = \frac{1}{2}\phi^2$,  which is a positive definite function with respect to $\phi$ and is zero only on the target manifold. By Eq.~\eqref{eq:omega_control}, the time derivative of $L$ is $\dot{L} = -k \phi \tanh(\phi)$. Since $k > 0$, and $\phi$ and $\tanh(\phi)$ always have the same sign, $\dot{L} \le 0$ and is zero only when $\phi=0$. By the Lyapunov Stability Theorem\cite{khalil2015nonlinear}, the system state is guaranteed to converge to $\phi=0$ and $\theta_d$ is globally asymptotically stable.

Once the robot configuration is on the target manifold $\mathcal{M}$, we have $\phi=0$ and the linear velocity $v$ (from either Eq. \eqref{eq:v_control1} or \eqref{eq:v_control2}) is $v=v_\text{max}$ if $p \ne p_g$. The robot's motion is therefore $\dot{p} = \frac{v_\text{max}}{||V(p)||} V(p)$, and by our construction of the vector fields, $\|V(p)\|=1$. This means the robot's positional trajectory $p(t)$ is equivalent to following the integral curves of the vector field $V(p)$ (scaled by a positive factor $v_\text{max}$). The guidance vector field $V(p)$ is, by construction, globally convergent to the goal $p_g$. Therefore, as the robot follows these integral curves, its position $p(t)$ is guaranteed to converge to the goal $p_g$ (detailed proofs in \cite{lindemann2009simple}).
\end{proof}
\vspace{0.75em}

In Proposition~\ref{prop:stability}, we proved that the robot can converge to the goal state $p_g$ using unsaturated angular velocity controllers. Here we discuss convergence to $p_g$ governed by the saturated control laws in Eq. \eqref{eq:v_control1} and Eq. \eqref{eq:omega_control_sat}. Consider the case when $\phi$ is large; therefore, the robot is misaligned. By Eq. \eqref{eq:v_control1}, $v$ is small or zero. The feedforward $\dthetad$ is a function of $v$; therefore, it is consequently small and the feedback term is dominant. The feedback term, even under saturation, reduces $\phi$ and the system correctly prioritizes reorientation.
Now, if $\phi$ is a small value, the feedback law is consequently small, and saturation can happen when  $\dthetad$ is large. This happens when the robot is moving at high speed. In these situations, a mismatch between the commanded and required angular velocity will cause the robot to fall off the target manifold $\mathcal{M}$, and $|\phi|$ will begin to increase. As $|\phi|$ increases, the controller Eq. \eqref{eq:v_control1} reduces $v$. The reduction of $v$ directly reduces the magnitude of $\dthetad$. This creates a self-regulating loop. When $|\phi|$ increases, $v$ decreases, therefore, $\dthetad$ decreases and saturation most likely stops. Once saturation ends, the controller's stability (from Proposition~\ref{prop:stability}) takes over, $\phi$ is driven back to zero, and $v$ increases again.
The robot never gets stuck unless it is in a set of measure zero. It only stops ($v=0$) if $|\phi| \ge \epsilon_v$ or if it reaches the goal $p_g$. Since $\omega$ is always acting to reduce $\phi$, the robot will not remain in a stopped state (unless $p=p_g$). The results in the next section confirm this claim.

While our safety proof in Proposition~\ref{prop:safety} is intuitive, we show that our proposed analytic control laws (Eqs.~\eqref{eq:v_control1} and \eqref{eq:omega_control_sat}) inherently satisfy Nagumo's theorem condition \cite{egerstedt2021robot}. This demonstrates that our work achieves the same rigorous safety guarantee as CBF-QP methods, but avoids the computational overhead of solving an optimization problem at each control step. As we discussed in Proposition~\ref{prop:safety}, non-exit face vector fields are assigned to point inward; therefore, on the boundary of the obstacles (considering polygonal obstacles and for non-polygonal obstacles, the non-exit face itself), points away. Therefore, when the robot is near the obstacles, having $|\phi| > \pi/2$ with a forward motion can cause a collision. By the control law Eq.~\eqref{eq:v_control1}, robot stops the forward motion when $|\phi| \ge \pi/2$, hence we can define a safe set as all states when $|\phi| \le \pi/2$. We prove that our safe set is not only forward invariant, but also globally attractive.
\begin{proposition}
\label{prop:Nagumo}
Let the safe set be $\safe := \{ q \in \free \times S^1 \mid |\phi| \le \pi/2 \}$. The control law given by Eqs. \eqref{eq:v_control1} and \eqref{eq:omega_control_sat} guarantee that $\safe$ is forward invariant (if $q(0) \in \safe$, then $q(t) \in \safe$ for all $t > 0$) and globally attractive (if $q(0) \notin \safe$, but $q(0) \in \Cfree$, there exists a finite time $T<T_\text{max}$ such that $q(T) \in \safe$)).
\end{proposition}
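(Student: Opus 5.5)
I would prove both claims by working with the barrier-type function $h(q) = \pi/2 - |\phi|$, so that $\safe = \{q \mid h(q)\ge 0\}$ and $\Bsafe = \{|\phi| = \pi/2\}$. Nagumo's theorem reduces forward invariance to a tangency condition: on $\Bsafe$ the closed-loop vector field must not point outward, i.e.\ $\frac{d}{dt}|\phi| \le 0$ whenever $|\phi|=\pi/2$. Since $|\phi|$ is not differentiable only at $\phi=0$, which is far from the boundary, I can work directly with $\dot\phi = \omega - \dthetad$ on each side $\phi = \pm\pi/2$.

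\textbf{Step 1 (boundary behaviour of $\dthetad$).} On $\Bsafe$ we have $|\phi| = \pi/2 \ge \epsilon_v$ because $\epsilon_v \in (0,\pi/2]$. Hence Eq.~\eqref{eq:v_control1} gives $v=0$, and therefore $\dthetad = v(\cdots) = 0$. The saturated command then becomes $\omega = \sigma(-k\tanh(\phi),\omega_\text{max})$, and
\[
\dot\phi = \sigma(-k\tanh(\phi),\omega_\text{max}) .
\]

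\textbf{Step 2 (sign condition).} The map $\sigma$ is odd and monotone, so $\sigma(-k\tanh\phi,\omega_\text{max})$ has the sign of $-\phi$. Its magnitude is $\min(k\tanh(\pi/2),\omega_\text{max}) > 0$. Consequently $\phi\dot\phi < 0$ on $\Bsafe$, i.e.\ $\frac{d}{dt}|\phi| < 0$ there. This strict inequality is stronger than Nagumo's non-strict condition, so $\safe$ is forward invariant. Position safety is not at stake on $\Bsafe$, because $v=0$ there. For uniqueness of solutions, which Nagumo needs, I would note that the closed loop is Lipschitz near $\Bsafe$. The reason is that $v$ vanishes identically on a neighbourhood of $\{|\phi|\ge\epsilon_v\}$ except when $\epsilon_v = \pi/2$; in that case the $\max(0,\cdot)$ kink is still Lipschitz.

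\textbf{Step 3 (global attractivity).} Take $q(0)\notin\safe$, so $\pi/2 < |\phi(0)| \le \pi$. As long as the trajectory stays outside $\safe$, we have $|\phi|>\epsilon_v$, hence $v=0$. The position is then frozen, which keeps $q$ in $\Cfree$ and makes $\dthetad=0$. The error dynamics reduce to the scalar ODE $\dot\phi = \sigma(-k\tanh\phi,\omega_\text{max})$. On the region $|\phi|\ge\pi/2$ the speed satisfies $|\dot\phi| \ge \mu := \min(k\tanh(\pi/2),\omega_\text{max})>0$, directed toward zero. The time needed to reduce $|\phi|$ from at most $\pi$ to $\pi/2$ is therefore at most
\[
T_\text{max} = \frac{\pi/2}{\mu},
\]
and $q(T)\in\safe$ for some $T\le T_\text{max}$. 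A strict inequality $T<T_\text{max}$ follows because $|\dot\phi|$ exceeds $\mu$ for $|\phi|>\pi/2$ when the bound is the $\tanh$ one.

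\textbf{Main obstacle: the wrapping point $\phi=\pi$.} There $\tanh(\phi)$ jumps between $\pm\tanh\pi$, so $\dot\phi$ is discontinuous. I would handle this in two ways. First, $\phi=\pi$ is an unstable equilibrium-like point only in the measure-zero sense. Second, in the Filippov sense either branch moves $|\phi|$ away from $\pi$ immediately at speed at least $\mu$, so the time bound still holds. I would state this as the measure-zero exception, consistent with the earlier propositions.
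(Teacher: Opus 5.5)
Your proposal is correct and follows essentially the paper's proof. On $\Bsafe$ the law \eqref{eq:v_control1} gives $v=0$ and hence $\dthetad=0$, so $\sigma(-k\tanh\phi,\omega_\text{max})$ alone points inward; the paper phrases this with $h=\cos\phi$ rather than your $\pi/2-|\phi|$. Outside $\safe$ the position is frozen and a scalar ODE with $|\dot\phi|\ge\min(\omega_\text{max},k\tanh(\pi/2))$ remains, which the paper turns into the slightly sharper state-dependent bound $T\le(|\phi(0)|-\pi/2)/\min(\omega_\text{max},k\tanh(\pi/2))$.

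One correction to your side remark on the wrap point. With $\phi\in(-\pi,\pi]$ one has $\tanh\pi>0$, so classical and Carath\'eodory solutions leave $\phi=\pi$ immediately at speed at least $\mu$, and no measure-zero exclusion is needed. It is precisely in the Filippov sense that $0$ lies in the convex hull at $\phi=\pi$, so there the stationary solution $\phi\equiv\pi$ is admitted — the opposite of what you wrote.
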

\begin{proof}
We choose $h(q) = \cos(\phi)$ as a barrier function, which is positive within the interior of the safe set and is zero on its boundary $\Bsafe$ (where $|\phi| = \pi/2$) and negative in the unsafe set.
To guarantee that the safe set is forward invariant, by Nagumo's theorem \cite{egerstedt2021robot}, we must show that the system can always be controlled to satisfy $\dot{h} \ge 0$ at the boundary of the safe set $\Bsafe$ (where $h=0$). While $\dot{h} > 0$ ensures that any trajectory touching the boundary is immediately pushed back into the safe set.
On the $\Bsafe$, the linear velocity $v$ becomes zero; therefore, $\dthetad$ is also zero. Therefore, $\dot{h} = -\sin(\phi) \cdot \sigma(- k \tanh(\phi), \omega_\text{max})$ which in both cases where $\phi=\pi/2$ and $\phi= -\pi/2$, with $k>0$, $\dot{h}$ is positive. Therefore, this strictly forces the robot's state into the safe set, proving forward invariance. 

Now, consider a state $q(0)$ in the unsafe set, $q(0) \notin \safe$, meaning $|\phi(0)| > \pi/2$. In this region, by Eq.~\eqref{eq:v_control1}, $v$ and $\dthetad$ are zero and $\dot{\phi} = \sigma(- k \tanh(\phi), \omega_\text{max})$. If $\phi(t) \in [\pi/2, \pi]$, then $\tanh(\phi) > 0$, so even under saturation $\dot{\phi} < 0$. This means $\phi$ decreases towards $\pi/2$. If $\phi(t) \in [-\pi, -\pi/2]$, then $\tanh(\phi) < 0$ so $\dot{\phi} > 0$ and $\phi$ increases towards $-\pi/2$. In all cases, the dynamics in the unsafe set drive $\phi$ toward the boundary of the safe set. Since $\dot{\phi}$ cannot be zero, the robot cannot get stuck in the unsafe set. Therefore, any state in the unsafe set by our controller is driven into the safe set $\safe$. To find the maximum time $T_\text{max}$, we need the minimum absolute rate of change $|\dot{\phi}|_\text{min}$ when the robot is in the unsafe set. We want to reach $|\phi(T)| = \pi/2$. The magnitude of our rate convergence $|\dot{\phi}|$ has a strict lower bound: $|\dot{\phi}| \ge \min(\omega_\text{max}, k \tanh(\pi/2))>0$ since $\omega_\text{max}$ and $k$ are positive values. This guarantees reaching the safe set boundary in finite time $T\leq (|\phi(0)| - \pi/2)/ \min(\omega_\text{max}, k \tanh(\pi/2))$.
\end{proof}

\section{Results}
First, we discuss the proposed \gls{qp} method for constructing vector fields. Our framework is based on the baseline method \cite{lindemann2009simple}. Lindemann and LaValle acknowledge that the quality of the resulting integral curves can be highly sensitive to cell decomposition and the assignment of the local vector fields. Their simple formulation, such as assigning cell vectors to point toward the exit face centroid and assigning inward normal vectors for non-exit faces, and outward normal vector for the exit face, produces integral curves with unnecessary bending. We should acknowledge that the baseline method works over any convex cell decomposition; therefore, post-processing the triangulation \cite{hertel1983fast} to create larger convex cells can improve the results of their method, although this can be time-consuming.

To address these shortcomings, we proposed a \gls{qp}-based method. As shown in Table~\ref{tab:combined_analysis} and Figure~\ref{fig:comparison_VFs}, this approach significantly improves performance. Following the path quality metrics established in \cite{11478294}, we evaluated the total bending $E_B = \int_0^L \kappa(s)^2 ds$, and total turning $E_T = \int_0^L |\kappa(s)| ds$ with curvature $\kappa(s)$, and path length, observing reductions of over 76\%, 64\% and 9\%, respectively.

The ``Win Rate'' metric, which represents a one-to-one comparison of over 1000 integral curves starting from the same initial position, shows that the \gls{qp}-based method outperforms the baseline in over 96\% of cases for path length and total bending. Therefore, our proposed method generates a more efficient vector fields, which are desirable for the control of the unicycle robot. It is essential to note that while our proposed method can significantly reduce the total bending, we cannot guarantee a lower maximum curvature, as a constant cell vector field and an underlying plan may require and induce sharper turns. 

\begin{figure}[ht]
    \centering
    \setlength{\tabcolsep}{0pt} 
    \begin{tabular}{cc}
        \includegraphics[width=0.45\linewidth,trim={2.5cm 1cm 2.5cm 0.5cm},clip]{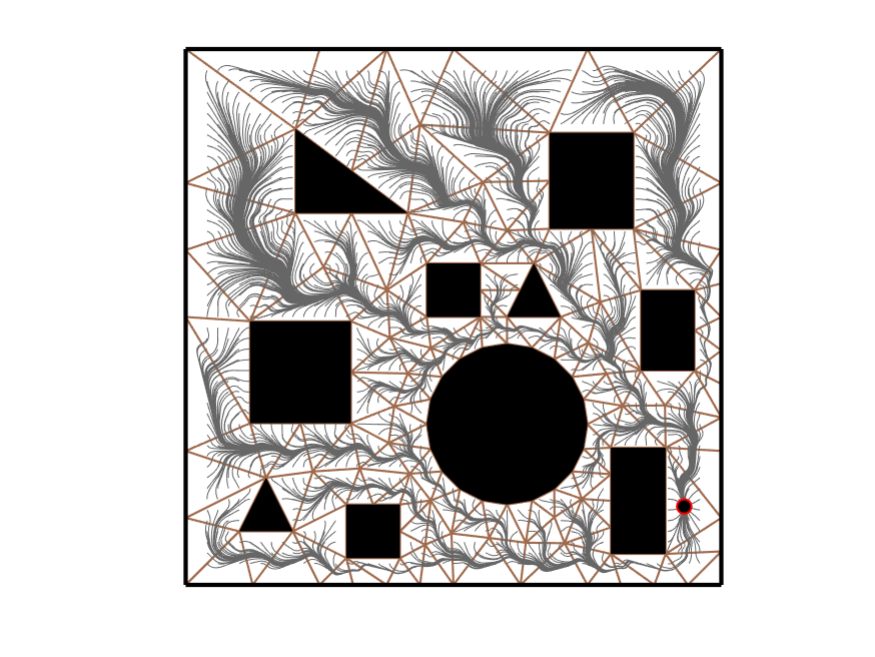} &
        \includegraphics[width=0.45\linewidth,trim={2.5cm 1cm 2.5cm 0.5cm},clip]{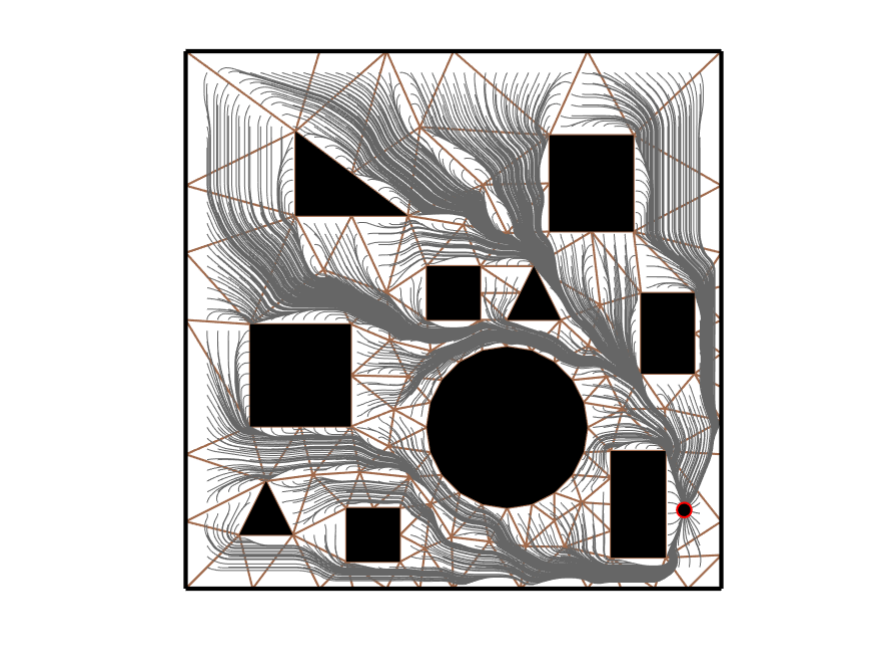} \\[0.1em]
       \includegraphics[width=0.45\linewidth,trim={2.5cm 1cm 2.5cm 0.5cm},clip]{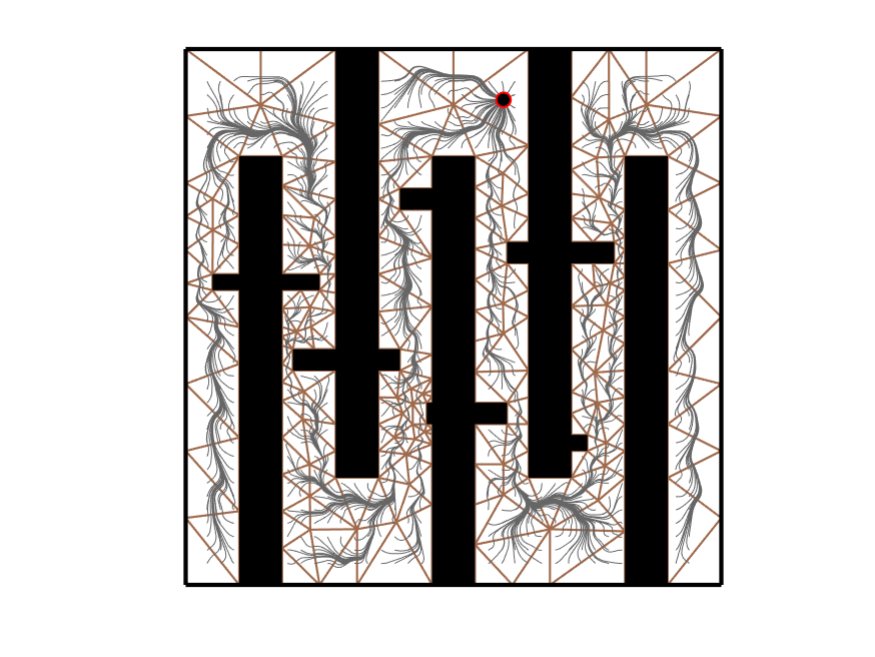} &
        \includegraphics[width=0.45\linewidth,trim={2.5cm 1cm 2.5cm 0.5cm},clip]{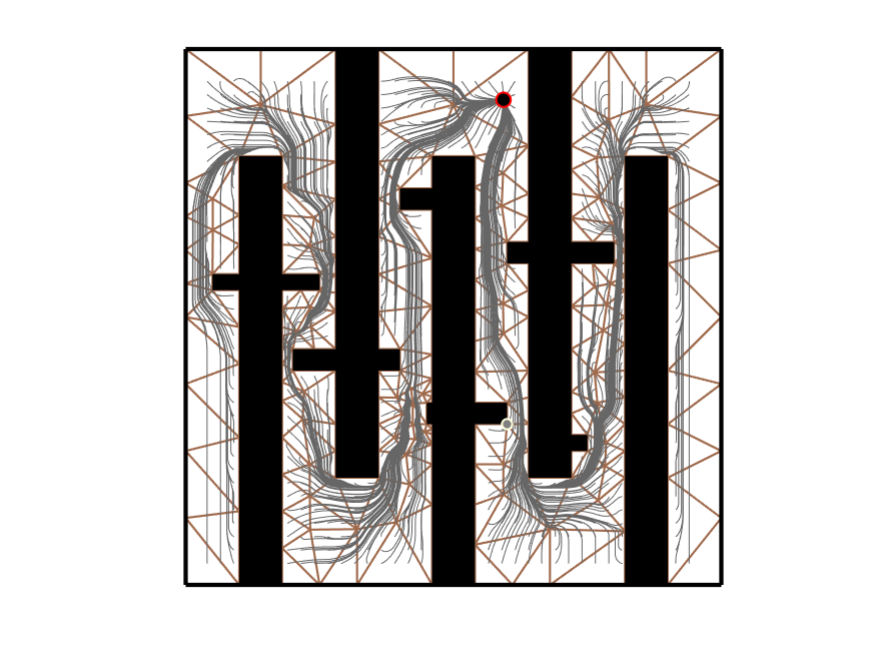}
    \end{tabular}

    \caption{comparison of integral curves for the baseline (left), and proposed \gls{qp} (right) methods across two environments. }
    \label{fig:comparison_VFs}
\end{figure}

\begin{table}[h!]
\centering
\footnotesize 
\caption{Vector Field Quality Comparison. All values are Mean $\pm$ Std Dev.}
\label{tab:combined_analysis}
\resizebox{\columnwidth}{!}{%
\begin{tabular}{@{}l l l r r r@{}}
\toprule
\textbf{Environment} & \textbf{Metric} & \textbf{Method} & \textbf{Value} & \textbf{\% Improv.} & \textbf{Win Rate \%} \\
\midrule
\multirow{12}{*}{\shortstack[l]{\textbf{Cluttered}}} 
 & \multirow{3}{*}{Path Len. (m)} & Baseline & 8.347 ($\pm$3.209) & - & - \\
 & & QP & 7.546 ($\pm$2.929) & 9.6 & 98.91 \\
 \cmidrule(l){2-6}
 & \multirow{3}{*}{Max Curvature} & Baseline & 17.165 ($\pm$7.371) & - & - \\
 & & QP & 17.312 ($\pm$12.879) & -0.85 & 75.26 \\
 \cmidrule(l){2-6}
 & \multirow{3}{*}{Total Bending} & Baseline & 126.507 ($\pm$49.977) & - & - \\
 & & QP & 29.331 ($\pm$29.939) & 76.81 & 96.55 \\
 \cmidrule(l){2-6}
 & \multirow{3}{*}{Total Turning} & Baseline & 21.266 ($\pm$8.015) & - & - \\
 & & QP & 7.509 ($\pm$3.053) & 64.69 & 98.91 \\
\midrule
\multirow{12}{*}{\shortstack[l]{\textbf{Corridor}}} 
 & \multirow{3}{*}{Path Len. (m)} & Baseline & 13.049 ($\pm$7.418) & - & - \\
 & & QP & 11.621 ($\pm$6.406) & 10.94 & 97.80 \\
 \cmidrule(l){2-6}
 & \multirow{3}{*}{Max Curvature} & Baseline & 27.456 ($\pm$11.757) & - & - \\
 & & QP & 27.710 ($\pm$15.279) & -0.93 & 55.69 \\
 \cmidrule(l){2-6}
 & \multirow{3}{*}{Total Bending} & Baseline & 366.344 ($\pm$250.481) & - & - \\
 & & QP & 74.880 ($\pm$55.661) & 79.56 & 96.45 \\
 \cmidrule(l){2-6}
 & \multirow{3}{*}{Total Turning} & Baseline & 43.304 ($\pm$26.574) & - & - \\
 & & QP & 12.52 ($\pm$8.54) & 68.19 & 99.02 \\
\bottomrule
\end{tabular}
} 
\end{table}

To validate the performance of our controller and QP vector field (QP-VF), we conducted two comparative experiments against the baseline vector field (baseline VF). 
\subsection{Experiment 1: Cluttered Environment}

In this work, we adopt the forward-only law given by Eq.~\eqref{eq:v_control1} for all simulations. The robot was simulated in a cluttered environment with input limits of $v_\text{max} = 1.0$~m/s and $\omega_\text{max}= 1.0$~rad/s. The feedback gain was set to $k=1$ and the velocity gain $\epsilon_v = \pi/4$. The initial orientation of the robot was set opposite the direction of the vector field at that point. The results are illustrated in Figure~\ref{fig:res1-2}, Figure~\ref{fig:inputres1}, and Table~\ref{tab:res1}, which show a significant improvement in performance. The proposed vector field produced a shorter path (15.032~m vs 15.261~m), less angular control effort $\int_0^{t_f} \omega(t)^2 dt$ (10.934 vs. 22.619), less time spent in angular saturation, and higher average speed (0.66 vs. 0.496).
\begin{figure}[ht]
    \centering
    \setlength{\tabcolsep}{0pt} 
    \begin{tabular}{cc}
        \includegraphics[width=0.43\columnwidth,trim={11cm 2cm 9cm 1cm},clip]{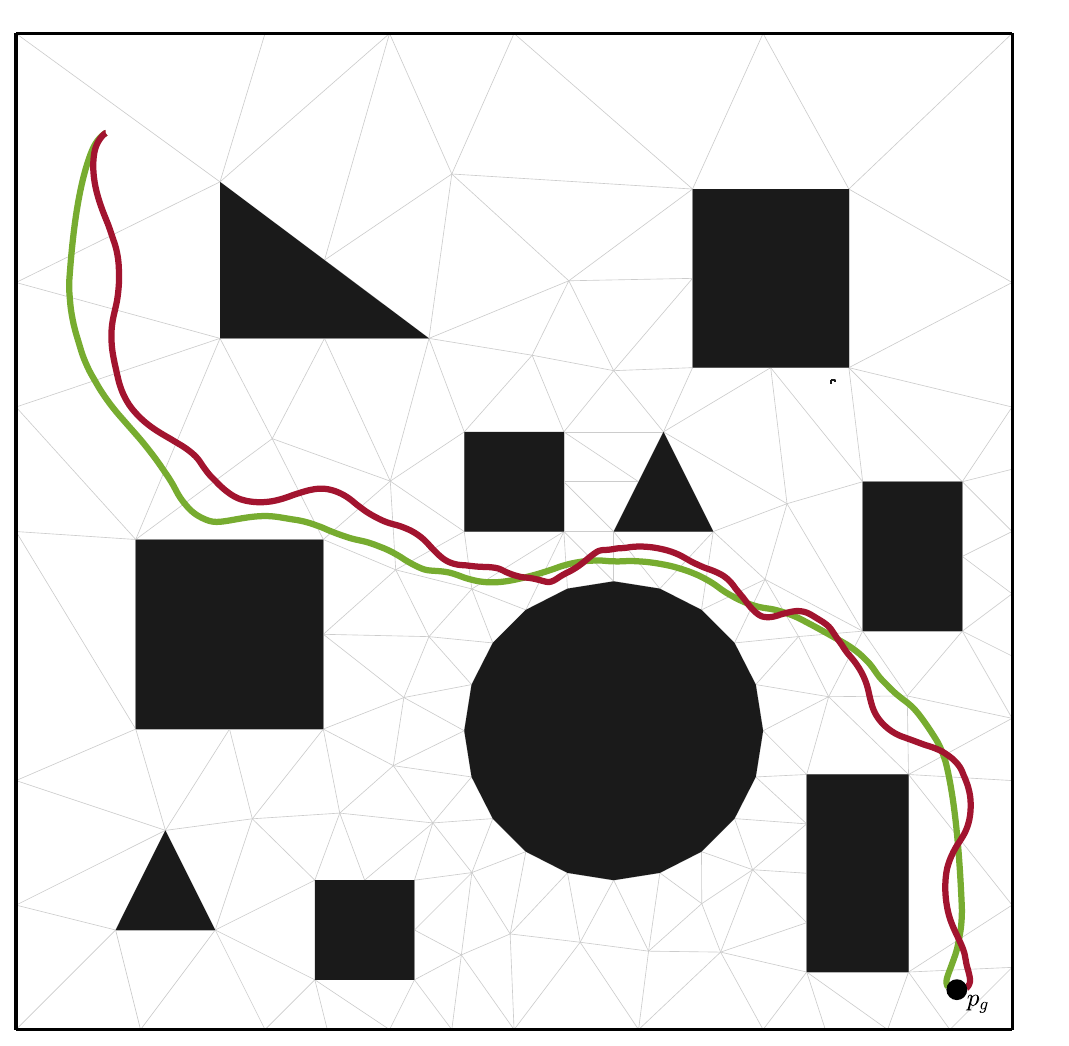} &
        \includegraphics[width=0.57\columnwidth,trim={11cm 2cm 3cm 1cm},clip]{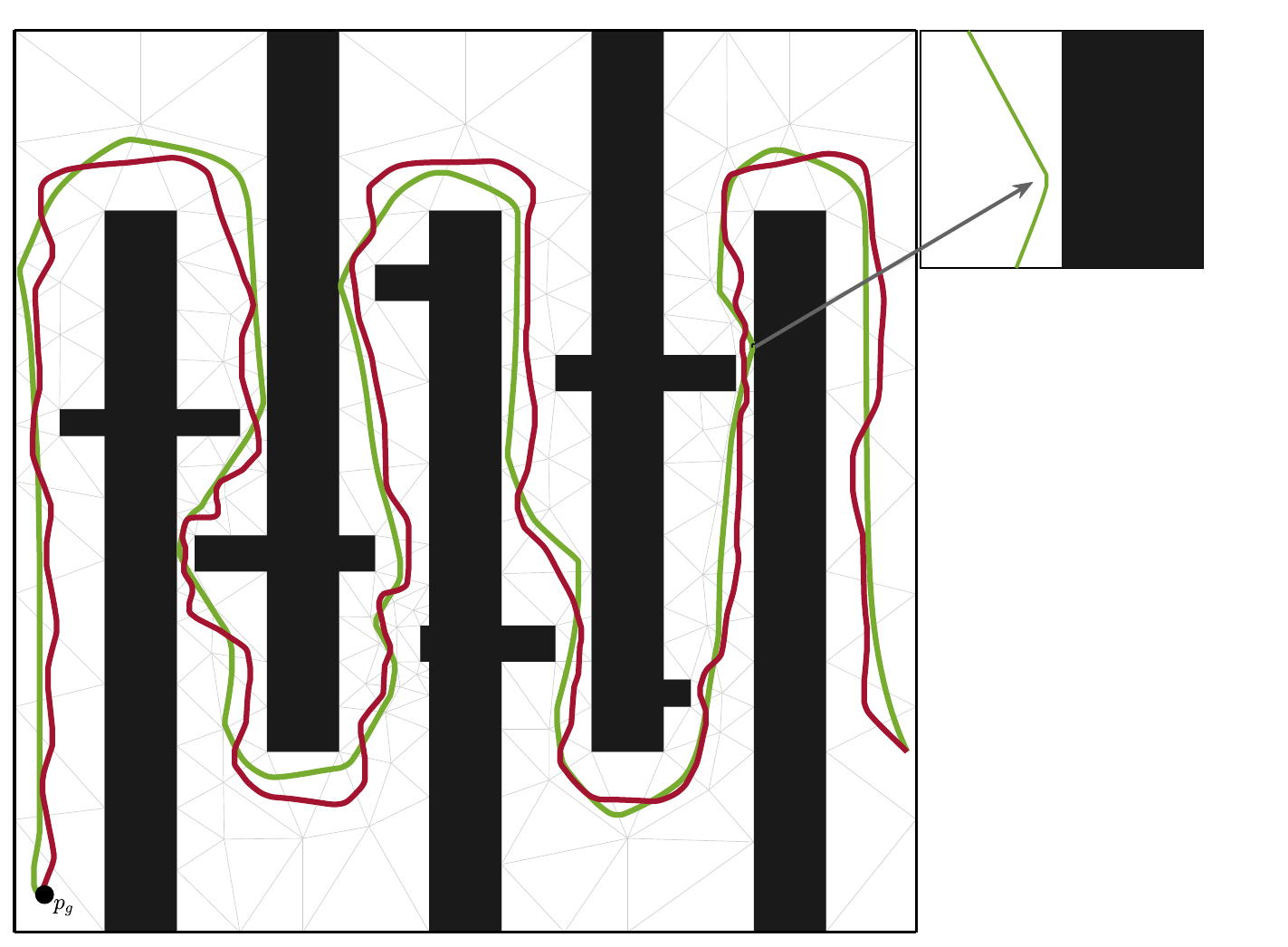} 
    \end{tabular}

    \caption{Trajectory comparison for the first experiment (left figure) and the second experiment (right figure). The green line represents the path of the robot using the proposed controller, guided by our QP-VF, and the red line represents the path of the robot guided by the baseline VF. }
    \label{fig:res1-2}
\end{figure}

\begin{figure}[t]
    \centering
\includegraphics[width=0.96\columnwidth,trim={0cm 0cm 0cm 0cm},clip]{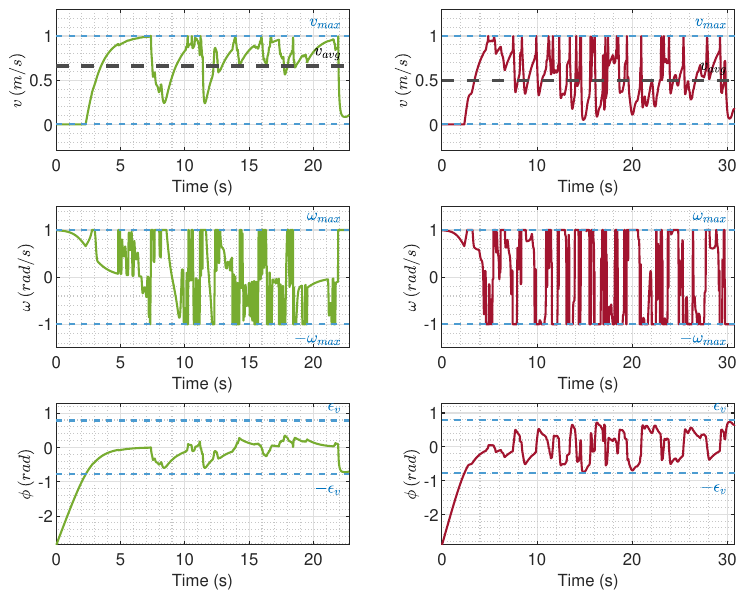}
    \caption{The results of experiment 1. The proposed QP-VF (left column) results in a higher average speed ($v_\text{avg}$, dashed line). The baseline (right column) exhibits highly saturated angular control. The bottom plots experimentally validate Proposition~\ref{prop:Nagumo}: the angular error $\phi$, which started from the unsafe set ($|\phi| > \pi/2$), is driven into the safe set and remains there.}
    \label{fig:inputres1}
\end{figure}

\begin{table}[h!]
\centering
\caption{Performance Metrics Comparison (Experiment 1)}
\label{tab:res1}
\begin{tabular}{lcc}
\toprule
\textbf{Metric} & \textbf{QP-VF} & \textbf{Baseline VF} \\
\midrule
Arrival Time (s)      &22.76& 30.77 \\
Path Length (m)        & 15.032 & 15.261 \\
Average Speed (m/s)    & 0.660  & 0.496 \\
Angular Control Effort  & 10.934 & 22.619 \\
Time Saturated (\%)    & 26.81  & 52.00  \\
\bottomrule
\end{tabular}
\end{table}
\subsection{Experiment 2: Narrow Environment with Hard Saturation}
This experiment was designed to evaluate the robustness of the proposed method under hard input constraints: $v_\text{max} = 0.8$~m/s and $\omega_\text{max} = 0.1$~rad/s. The feedback gain was set to $k=1$ and the velocity gain $\epsilon_v = \pi/5$. The robot started from a position near the obstacle, facing the obstacle. The results are summarized in Table~\ref{tab:res2} and visualized in Figure~\ref{fig:res1-2} and Figure~\ref{fig:inputres2}.

The results show that even under hard saturation, the robot could safely converge to the goal $p_g$. The high curvature of the vector fields demands an angular velocity that the robot cannot provide. This leads the controller to remain at a high rate of saturation time. As discussed in the safety analysis, the controller mitigates the resulting large orientation error by reducing the linear velocity $v$. Results show that, by the QP-VF, the robot reaches the goal nearly twice as fast with a lower angular controller effort (3.341 vs 7.326). 

\begin{figure}[t]
    \centering
\includegraphics[width=0.96\columnwidth,trim={0cm 0cm 0cm 0cm},clip]{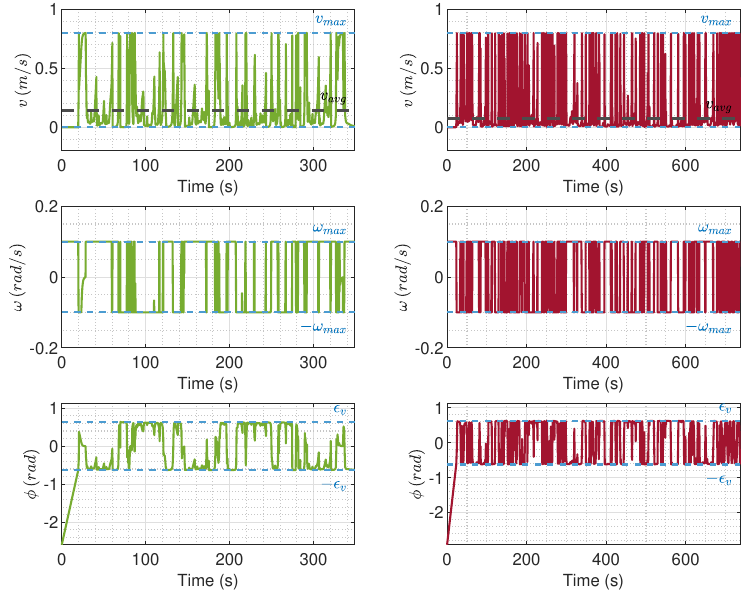}
    \caption{The results of experiment 2.
    These plots demonstrate the self-regulating loop discussed in Section~\ref{sec:Stability}. The proposed \gls{qp}-VF (left column) allows for a higher average speed ($v_\text{avg}$, dashed line) compared to the baseline (right column).}
    \label{fig:inputres2}
\end{figure}

\begin{table}[h!]
\centering
\caption{Performance Metrics Comparison (Experiment 2)}
\label{tab:res2}
\begin{tabular}{lcc}
\toprule
\textbf{Metric} & \textbf{QP-VF} & \textbf{Baseline VF} \\
\midrule
Arrival Time (s)      &348.92 & 736.70 \\
Path Length (m)        & 49.798  & 52.134  \\
Average Speed (m/s)    & 0.143   & 0.071  \\
Angular Control Effort  & 3.341  & 7.326  \\
Time Saturated (\%)    & 94.74  & 99.14   \\
\bottomrule
\end{tabular}
\end{table}

Our method splits the computation into a pre-computation phase and an online phase. The pre-computation phase includes the generation of triangulation, computation of the discrete plan, and assignment of local vector fields via \gls{qp}. The online phase includes finding the robot's current cell (if it is unknown) via point-location query techniques, finding distances to faces and blending local vector fields to evaluate $V(p)$, and finally computing control inputs.

Triangulation for a 2D polygonal environment with \gls{cdt} without Steiner Points, can be done in $O(n \log n)$, where $n$ is the number of obstacle vertices~\cite{shewchuk1996triangle}. The successor of each triangle can be found in $O(m \log m)$, in which $m$ is the number of triangles~\cite{lindemann2006real}. On a system with an AMD Ryzen 9 8945HS CPU and 32 GB RAM, triangulation using the \href{https://www.cs.cmu.edu/~quake/triangle.html}{\texttt{triangle}} library \cite{shewchuk1996triangle} took about 1 ms, and the assignment of the QP-based cell vector field required about 1 ms with the \href{https://osqp.org/docs/}{\texttt{OSQP}} solver \cite{osqp}, but the total pre-computation took about 10 ms. Triangulation in MATLAB can also be done with \href{https://github.com/dengwirda/mesh2d}{\texttt{MESH2D}} library \cite{engwirda2014locally}. While efficient for static obstacles, the framework can handle dynamic obstacles by globally recomputing the plan at about 100 Hz (for standard environments). However, the code can be further optimized for more efficiency. Online execution is lightweight. For a given state, the method performs a point-location query, which requires $O(n)$ time \cite{lindemann2006real}, followed by minimal calculation for the distances to the faces for the calculation of vector fields blending and controller evaluation.

\section{Conclusion}
We presented a framework for feedback motion planning and control for a non-holonomic unicycle robot. We introduced a novel \gls{qp} formulation that globally optimizes the assignment of cell vector fields, generating a $\Cinf$-smooth vector field with measurably lower total bending and total turning. Building upon this vector field, we designed a nonlinear controller, inspired by~\cite{lindemann2006real}, to safely navigate the robot from almost any admissible state while respecting input limits. We proved that our controller inherently satisfies safety conditions without requiring online optimization.
Simulations confirmed the effectiveness of our method. Even under hard saturation limits, the robot safely converged to the goal. The synergy between the proposed vector field and controller resulted in significantly lower arrival times and reduced angular control effort compared to the baseline. Future work could focus on extending this framework to more complex systems, such as car-like robots.


\end{document}